\documentclass[11pt,a4paper]{article}

\textheight = 25cm
\textwidth = 17.25cm
\oddsidemargin = -7.5mm
\evensidemargin = -7.5mm
\topmargin = -1.5cm

\usepackage{graphicx}
\usepackage{hyperref} 
\usepackage{amsmath,amsfonts}
\usepackage{amsthm}

\usepackage{tabularx}
\usepackage{enumerate}
\usepackage{algorithm}
\usepackage{algpseudocode}
\usepackage{xspace}

\usepackage{authblk}
\usepackage{multirow}

\usepackage{longtable}
\usepackage{graphicx}
\usepackage{subfig}
\usepackage{mathrsfs}
\usepackage{mathabx}
\usepackage[normalem]{ulem} 
\usepackage{setspace}
\usepackage{float} 

\doublespacing

\usepackage[colorinlistoftodos]{todonotes}
\usepackage[utf8]{inputenc}

%Conjuntos numéricos

%Paréntesis Corchetes Llaves

%soporte

%Conjuntos numéricos

\def\R{\ensuremath{\mathbb{R}}}
\def\C{\ensuremath{\mathbb{C}}}

%cubos diádicos

%norma ||| |||

%integral en R^{n}

%conjuntos L^{p}  y besov y L1 de Silvestre

\def\L2{L^{2}(\mathbb{R}^{n})}

%borelianos y medibles

%Conjunto de Cantor

%Conjuntos de funciones

%espcio de funciones de S

%Transformada de Fourier

%lapalaciano

%laplaciano fraccinario

%flecha de inclusión continua

%soporte

%orden

%ángulos
\def\<{\ensuremath{\langle}}
\def\>{\ensuremath{\rangle}}

%extension

\newtheorem{proposition}{Proposition}[section]
\newtheorem{lemma}{Lemma}[section]

\newtheorem{definition}{Definition}[section]

\def\I{\mathbb {I}}
\def\R{\mathbb{R}}

\def\L{\mathcal{L}}

\graphicspath{{Figures/}} % Directory in which figures are stored

\allowdisplaybreaks

% Keywords command
\providecommand{\keywords}[1]
{
  \small	
  \textbf{\textit{Keywords---}} #1
}

% Make Orcid icon
\definecolor{lime}{HTML}{A6CE39}
\DeclareRobustCommand{\orcidicon}{%
	\begin{tikzpicture}
	\draw[lime, fill=lime] (0,0) 
	circle [radius=0.16] 
	node[white] {{\fontfamily{qag}\selectfont \tiny ID}};
	\draw[white, fill=white] (-0.0625,0.095) 
	circle [radius=0.007];
	\end{tikzpicture}
	\hspace{-2mm}
}

\foreach \x in {A, ..., Z}{%
	\expandafter\xdef\csname orcid\x\endcsname{\noexpand\href{https://orcid.org/\csname orcidauthor\x\endcsname}{\noexpand\orcidicon}}
}
%Mateos
%Aimar

\title{Divergence Phase Index: A Riesz-Transform Framework for Multidimensional Phase Difference Analysis}

\author[1]{Magaly Catanzariti}
\author[1,*]{Hugo Aimar\orcidB{}}
\author[1,2,3,*] {Diego M. Mateos\orcidA{}}

\affil[1]{\normalsize Instituto de Matem\'{a}tica Aplicada del Litoral (IMAL), UNL, CONICET. Santa Fe, Argentina.}
\affil[2]{\normalsize Facultad de Ciencia y Tecnolog\'{\i}a, Universidad Aut\'{o}noma de Entre R\'{\i}os (UADER). Oro Verde, Entre R\'{\i}os, Argentina.}
\affil[3]{\normalsize Achucarro Basque Center for Neuroscience. Leioa, Vizcaya, Spain.}

\affil[*]{Corresponding authors: Diego M. Mateos (\texttt{mateosdiego@gmail.com}), Hugo Aimar (\texttt{haimar@santafe-conicet.gov.ar})}
\sloppy

\begin{document}
\date{}
\maketitle

%================= ABSTRACT
\begin{abstract}
We introduce the Divergence Phase Index (DPI), a novel framework for quantifying phase differences in one and multidimensional signals, grounded in harmonic analysis via the Riesz transform. Based on classical Hilbert Transform phase measures, the DPI extends these principles to higher dimensions, offering a geometry-aware metric that is invariant to intensity scaling and sensitive to structural changes. We applied this method on both synthetic and real-world datasets, including intracranial EEG (iEEG) recordings during epileptic seizures, high-resolution microscopy images, and paintings. In the 1D case, the DPI robustly detects hypersynchronization associated with generalized epilepsy, while in 2D, it reveals subtle, imperceptible changes in images and artworks. Additionally, it can detect  rotational variations in highly isotropic microscopy images. The DPI’s robustness to amplitude variations and its adaptability across domains enable its use in diverse applications from nonlinear dynamics, complex systems analysis, to multidimensional signal processing.
\end{abstract}

%%===========================================================
\keywords{ Divergence Phase Index, Riesz transform, Multidimensional phase analysis, Phase synchronization, Image similarity metrics, Neurophysiological Signal Analysis. }

%\bigskip

%\bigskip
%==================================================
%                   Introduction
%==================================================
\section{Introduction}
The study of the synchronization between signals is one of the most important goals in the field of one-dimensional signal analysis \cite{Pikovsky_2001}. To address this, various measures have been developed to quantify the similarity or dissimilarity between signals. Among the most widely used approaches are those based on analysing phase differences extracted via the Hilbert transform. A pioneering contribution in this area came from Mormann et al. \cite{Mormann_2000}, introducing the notion of phase coherence. However, conventional definitions of phase difference are intrinsically one-dimensional, making their extension to multidimensional signals—such as images or spatial maps—nontrivial.

Building based on these ideas, this work develops a mathematically rigorous framework that extends the notions of instantaneous phase difference  from one-dimensional signals to higher-dimensional data. Starting from classical harmonic analysis, we formalized the Hilbert transform using the Fourier representation of Schwartz distributions. This naturally leads to a definition of instantaneous phase based on analytic signals. We then use the theory of singular integrals, particularly Riesz transforms, to develop a generalised version of the Hilbert transform that is suitable for multidimensional signals. This enables us to define a \textit{Divergence Phase Index} ($DPI$), which can be used to evaluate phase differences between scalar or tensor-valued functions defined in Euclidean spaces.

We applied our algorithm to concrete examples that demonstrate the application of $DPI$ to real and synthetic data. In one-dimensional cases, we analysed intracranial EEG signals from an epileptic patient. In higher dimensions, we show how $DPI$ distinguishes between images (generated and real) based on structural differences, even when intensity variations are present.
Finally, we present the properties for detecting rotation in highly isotropic biological samples.
These results emphasize the potential of $DPI$ as a robust, geometry-aware tool for the study of synchronization and structural differences in a wide range of scientific and engineering contexts.

Section~\ref{sec:theoretical} formalizes distributional Fourier/Hilbert theory, from the point of view of analytic functions. In Section~\ref{HTsignal}, we introduce the concept of instantaneous phase of a signal on the basis of the Hilbert Transform and the Divergence Phase Index ($DPI$) in one dimension. Section~\ref{HigherExtension} constructs the Riesz-based phase vector $\vec{\Phi}_f$  and gives the definition of DPI in dimension larger than one. In Section~\ref{sec:results} we apply $DPI$ to neurophysiological and imaging data. Finally, in Section~\ref{sec:discussion} we discuss the implications of the concept for multidimensional signal processing.

\section{Theoretical Framework}\label{sec:theoretical}

In this section, we aim to introduce the Hilbert transform from the complex analytic point of view, with no appeal to the somehow less intuitive definition of the Hilbert transform via the principal value of a convolution singular integral with a given signal. In doing so, we aim to clear the way to the main point of this note, i.e., the notion of phase for higher dimensional signals.

Usually the complex variable $z$ is taken to be $x+iy$, with $x$ and $y$ in $\R$. Since the real part of $z$ in our one dimensional analysis is meant to be time, we shall instead use $z=t+iy$ with $t$ and $y$ real numbers to denote the complex numbers.

Lets us start by a very simple situation that leads to a quite intrinsic relation between the two basic trigonometric functions $c(t)=\cos{t}$ and $s(t)=\sin{t}$ in $\R$.
The function $F(z)=e^{iz}=e^{-y}e^{it}$ is analytic in the whole complex plane. Its restriction to the real axis (time axis) is given by $F(t)=e^{it}=\cos{t}+i\sin{t}=c(t)+is(t)$. In other words, $s(t)$ can be seen as the imaginary part of the restriction to the real axis of the analytic function $F$ in $\C$ that has $c(t)$ as the real part of this restriction. This procedure can be seen as an operator applying the real part of the restriction of $F$ to the real axes into the imaginary part of this restriction. Let us precise the above. Given $F(z)=u(z)+iv(z)$, with $u$ and $v$ real, be an analytic function in $\C$. Set $\varphi(t)=u(t,0)$, the restriction of the real part of $F$ to the real axis. Define $(H\varphi)(t)=v(t,0)$, the restriction to the real axis of the imaginary part $v$ of $F$. With this notation and $F(z)=e^{iz}$ as above, we have that $s(t)=(Hc)(t)$.
Hence $\frac{(Hc)(t)}{c(t)}=\frac{s(t)}{c(t)}=\tan{t}$, so that $t=\arctan{\frac{(Hc)(t)}{c(t)}}$ for every $t\in\R$.

The operator $H$ heuristically defined above is the Hilbert transform. The critical point of the above approach to $H$ is the fact that not ``every signal'' $\varphi(t)$ is the restriction to the real axis $\I m\,z=0$, of an analytic function on the whole plane $\C$, or even in some open neighbourhood of the upper half plane $\{\I m\,z>0\}=\{ z=t+iy:y>0 \}$.
Nevertheless, using the Poisson kernel and the Cauchy-Riemann equations, it is still possible to extend the naive approach introduced above to the case of non necessarily analytic signals $\varphi(t)$. The basic algorithm to obtain $H\varphi$ is the following. First, find a harmonic function $u$ in $\{\I m\,z>0 \}$ with boundary value $u(t,0)=\varphi(t)$, then solve the Cauchy-Riemann system to find a conjugate harmonic function $v$ of $u$, and finally restrict $v$ to the real axis. The value $v(t,0)$ is the Hilbert transform of $\varphi(t)$. Briefly, $H\varphi(t)=v(t,0)$ where $v$ is a conjugate harmonic of $u(t,y)$ with $u(t,0)=\varphi(t)$. We aim here to give an explicit formula for this operator action based on the Fourier Transform.

\subsection{Analytic functions and the Fourier version of the Cauchy-Riemann system}

Given an integrable one-dimensional signal $f(t)$, usually denoted by $f\in L^1(\R)$, the space of Lebesgue integrable functions, its Fourier Transform is defined by
\begin{equation}
    \widehat{f}(\xi)=\int_{t\in\R}f(t)e^{-2\pi i \xi t}dt.
\end{equation}
A well-known result, the Riemann-Lebesgue Lemma, shows that $\widehat{f}$ is continuous and tends to zero when $|\xi|\to 0$. When $f$ is smooth and vanishes at infinity, after a simple integration by parts, we see that
\begin{equation} \label{eq:RLLemma}
    \widehat{\frac{df}{dt}}(\xi)=2\pi i \xi \widehat{f}(\xi).
\end{equation}

Recall that given a $\mathcal{C}^2$ function $F(z)=u(z)+iv(z)$ defined on $\{\I m\,z>0\}$, with $u(z)$ and $v(z)$ real valued,the function $F$ is analytic in $\{\I m \,z>0\}$ if and only if
\begin{equation} \label{eq:CR}
\text{(CR)} \quad 
\begin{cases}
\dfrac{\partial u}{\partial t}(t,y) = \dfrac{\partial v}{\partial y}(t,y), \\[1.2ex]
\dfrac{\partial v}{\partial t}(t,y) = -\,\dfrac{\partial u}{\partial y}(t,y),
\end{cases}
\end{equation}
for $t\in\R$ and $y>0$.

Now for $y>0$ fixed, set $\widehat{u}(\xi,y)$ to denote the Fourier Transform of $u(t,y)$ as a function of $t$. In other words, $\widehat{u}(\xi,y)=\int_{t\in\R}u(t,y)e^{-2\pi i \xi t}dt$.

Similarly for $\widehat{v}(\xi,y)$. From \eqref{eq:RLLemma}, the Cauchy-Riemann system can be written as
\begin{equation}\label{eq:CRF}
\text{(CRF)} \quad 
\begin{cases}
2\pi i\xi \widehat{u} (\xi,y) = \dfrac{\partial}{\partial y}\widehat{v}(\xi,y), \\[1.2ex]
2\pi i\xi \widehat{v} (\xi,y) =- \dfrac{\partial}{\partial y}\widehat{u}(\xi,y),
\end{cases}
\end{equation}
for $\xi\in\R$ and $y>0$.

\subsection{The Hilbert Transform of a given signal}
\label{subsec:HTsignal}
With the above version of the Cauchy-Riemann equation through the Fourier Transform, we are in position to describe the Hilbert operator in an operational way, avoiding the computation of principal values of singular integrals.

\begin{lemma} \label{lemma2.1}
    Let $\varphi(t)$ be a real signal defined on $\R$. Set $u(t,y)$ for $t\in\R$ and $y>0$, to be such that
    \begin{equation*}
        \widehat{u}(\xi,y)=e^{-2\pi y |\xi|}\widehat{\varphi}(\xi).
    \end{equation*}
    Then
    \begin{enumerate}[(1)]
        \item  $u(t,y)$ is harmonic in $\{\I m\,z>0\}$;
        \item $u(t,0)=\varphi(t)$, $t\in\R$;
        \item the function $v(t,y)$ such that $\widehat{v}(\xi,y)=-i\sigma(\xi)\widehat{u}(\xi,y)$ with $\sigma(\xi)=1$ if $\xi>0$, $\sigma(\xi)=-1$ if $\xi<0$ is a harmonic conjugate of $u(t,y)$;
        \item $\widehat{v}(\xi,0)=-i\sigma(\xi)\widehat{\varphi}(\xi)$.
    \end{enumerate}
\end{lemma}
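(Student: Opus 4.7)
The plan is to verify the four assertions by working entirely on the Fourier side, using (CRF) as the characterization of conjugate harmonicity and then inverting. Throughout, I assume enough regularity/decay on $\varphi$ (say $\varphi$ Schwartz, then extended by density) to justify differentiation under the integral sign and Fourier inversion; the formula $\widehat{u}(\xi,y)=e^{-2\pi y|\xi|}\widehat{\varphi}(\xi)$ is exactly the Fourier side of the Poisson extension, so $u(t,y)$ is smooth in the open upper half plane.

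For (1), I would compute the Laplacian of $u$ by applying $\partial_t^2+\partial_y^2$ on the Fourier side. The $t$-derivatives bring down factors $(2\pi i\xi)^2=-(2\pi\xi)^2$, while the $y$-derivatives act on $e^{-2\pi y|\xi|}$ and produce $(2\pi|\xi|)^2=(2\pi\xi)^2$. These cancel, so $\widehat{\Delta u}(\xi,y)\equiv 0$ and hence $\Delta u\equiv 0$ in $\{y>0\}$. For (2), at $y=0$ the exponential factor is $1$, so $\widehat{u}(\xi,0)=\widehat{\varphi}(\xi)$; Fourier inversion gives $u(t,0)=\varphi(t)$.

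The core of the argument is (3). I would verify the two equations of (CRF) from Section 2.1 directly. For the first, $\partial_y\widehat{v}(\xi,y)=\partial_y\bigl(-i\sigma(\xi)e^{-2\pi y|\xi|}\widehat{\varphi}(\xi)\bigr)=2\pi i\,\sigma(\xi)|\xi|\,e^{-2\pi y|\xi|}\widehat{\varphi}(\xi)$, and using the key algebraic identity $\sigma(\xi)|\xi|=\xi$ this equals $2\pi i\xi\,\widehat{u}(\xi,y)$, as required. For the second, $2\pi i\xi\widehat{v}(\xi,y)=2\pi\xi\sigma(\xi)e^{-2\pi y|\xi|}\widehat{\varphi}(\xi)=2\pi|\xi|e^{-2\pi y|\xi|}\widehat{\varphi}(\xi)$, which coincides with $-\partial_y\widehat{u}(\xi,y)$. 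Since (CRF) on the Fourier side is equivalent to (CR) on the physical side by \eqref{eq:RLLemma}, this shows $v$ is a harmonic conjugate of $u$; harmonicity of $v$ itself is then automatic (either from the CR pairing, or by the same Fourier calculation used for $u$, since $-i\sigma(\xi)$ is a bounded multiplier that commutes with $\partial_t^2+\partial_y^2$ on the Fourier side). Finally, (4) is just the evaluation of the definition of $\widehat{v}(\xi,y)$ at $y=0$, combined with (2).

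The only real obstacle, beyond bookkeeping of constants and signs, is the algebraic identity $\sigma(\xi)|\xi|=\xi$ that matches the multiplier $-i\sigma(\xi)$ defining $v$ to the multiplier $2\pi i\xi$ that appears in (CRF); everything else is a consequence of $e^{-2\pi y|\xi|}$ being the Fourier symbol of the Poisson semigroup. A minor technical point is justifying the interchange of $\partial_y$ with the inverse Fourier integral at $\xi=0$, where $|\xi|$ is not smooth; this is handled by the fact that $e^{-2\pi y|\xi|}$ and its $y$-derivative are bounded uniformly in $\xi$ for each fixed $y>0$, so dominated convergence applies whenever $\widehat{\varphi}\in L^1$.
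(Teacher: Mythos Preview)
Your proposal is correct and follows essentially the same approach as the paper: both verify (1) and (2) by computing on the Fourier side and evaluating at $y=0$, establish (3) by checking the two (CRF) equations directly via the identity $\sigma(\xi)|\xi|=\xi$, and deduce (4) by setting $y=0$ in the definition of $\widehat{v}$. Your additional remarks on regularity and on the harmonicity of $v$ being automatic are fine refinements that the paper leaves implicit.
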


\begin{proof}
    \begin{enumerate}[(1)]
        \item Let us compute the Laplacian of $u$ by taking Fourier Transform in the variable $t$ using twice \eqref{eq:RLLemma} in Section~2.1  and the definition of $u$,
        \begin{align*}
            \left( \frac{\partial^2 u}{\partial t^2} + \frac{\partial^2u}{dy^2}\right)^{\wedge}(\xi,y)=& \widehat{\frac{\partial^2u}{dt^2}}(\xi,y)+\frac{\partial^2\widehat{u}}{dy^2}(\xi,y) \\
            =& -4\pi^2\xi^2\widehat{u}(\xi,t)+\frac{\partial^2}{dy^2}e^{-2\pi y|\xi|}\widehat{\varphi}(\xi)\\
            =& -4\pi^2 \xi^2\widehat{u}(\xi,y)+4\pi^2\xi^2\left(e^{-2\pi y|\xi|}\widehat{\varphi}(\xi)\right)\\
            =& -4\pi^2 \xi^2 \widehat{u}(\xi,y)+4\pi^2 \xi^2\widehat{u}(\xi,t)\\
            =& 0.
        \end{align*}
        So that $\frac{\partial^2u}{dt^2}+\frac{\partial^2u}{dy^2}=0$ on $\{\I m\,z>0\}$
        \item Taking $y=0$ in the definition of $\widehat{u}(\xi,y)$, we have that $\widehat{u}(\xi,0)=\widehat{\varphi}(\xi)$. Hence $u(t,0)=\varphi(t)$, as desired.
        \item We only need to check that the pair of functions $\widehat{u}(\xi,y)=e^{-2\pi y|\xi|}\widehat{\varphi}(\xi)$ and $\widehat{v}(\xi,y)=-i\sigma(\xi)\widehat{u}(\xi,y)=-i \sigma(\xi)e^{-2\pi y |\xi|}\widehat{\varphi}(\xi)$ satisfy the system $(CRF)$ \eqref{eq:CRF}. Let us check the first one
        \begin{align*}
            2\pi i \xi \widehat{u}(\xi,y)=&-i\sigma(\xi)(-2\pi|\xi|)\widehat{u}(\xi,y)\\
            =& -i\sigma(\xi)(-2\pi|\xi|)e^{-2\pi|\xi|y}\widehat{\varphi}(\xi)\\
            =& -i\sigma(\xi)\frac{\partial}{\partial y}\left( e^{-2\pi |\xi|y}\widehat{\varphi}(\xi)\right)\\
            =& \frac{\partial}{\partial y}\widehat{v}(\xi,y).\\
        \end{align*}
        For the second, we have
        \begin{align*}
            2\pi i \xi \widehat{v}(\xi,y)=& (2\pi i \xi)\left(-i\sigma(\xi)\widehat{u}(\xi,y)\right)\\
            =& 2\pi|\xi|\widehat{u}(\xi,y)\\
            =& -\frac{\partial}{\partial y}\widehat{u}(\xi,y).
        \end{align*}
        \item Follows from $(3)$ taking $y=0$.
    \end{enumerate}
\end{proof}

\begin{definition}
    The function $v(t,0)$ in $(4)$, given by $\psi(t)=(-i\sigma\widehat{\varphi})^\vee(t)$ is the Hilbert Transform of $\varphi$ and is denoted by $H\varphi(t)$, where $\vee$ denotes the inverse Fourier Transform.
\end{definition}

\section{The instantaneous phase of a signal}\label{HTsignal}
As noticed in the previous section, the operator $H$ satisfies $H(C) = S$ where $C(t)=\cos{t}$ and $S(t)=\sin{t}$, we see that

\begin{equation*}
 t=\tan^{-1}\frac{\sin t}{\cos t}=\tan^{-1}\frac{Hf(t)}{f(t)}.
\end{equation*}
So that we say that the instantaneous phase of $C(t)$ is $\phi_f(t) = \tan^{-1}\frac{Hf(t)}{f(t)}=t$. On the other hand, with $S(t)=\sin t$, since from $H\widehat{\varphi}(t)=-i\sigma\widehat{\varphi}$ we have that $H^2\varphi=H(H(\varphi))=-\varphi$, we get that
\begin{align*}
 \phi_S(t) =& \tan^{-1}\frac{HS(t)}{S(t)} \\
 =& \tan^{-1} \frac{-\cos t}{\sin t}\\
 =& \tan^{-1} \frac{\cos t}{-\sin t} \\
 =& \tan^{-1} \frac{\sin (t + \frac{\pi}{2})}{\cos (t + \frac{\pi}{2})}\\
 =& t + \frac{\pi}{2}.
\end{align*}
The instantaneous phase difference for the two basic trigonometric functions is, at it should be expected,
\begin{equation*}
 \phi_S(t) - \phi_C(t)= \left(t + \frac{\pi}{2}\right) - t = \frac{\pi}{2}.
\end{equation*}
This basic test provides an elementary heuristics for the definition of the instantaneous phase of a signal.

\begin{definition}
 Let $f$ be a given signal whose Hilbert transform is given by the function $Hf(t)$. We define the instantaneous phase of $f$ as
 \begin{equation*}
     \phi_f(t)=\tan^{-1}\frac{Hf(t)}{f(t)}.
 \end{equation*}
\end{definition}

Which from our construction of $H$ in terms of the Fourier Transform given in Lemma~\ref{lemma2.1}, can be explicitly given in terms of the signal $f$ by
\begin{equation*}
 \phi_f(t)=\tan^{-1}\frac{(-i\sigma\widehat{f})\,\widecheck{}}{f(t)},
\end{equation*}
where $i$ is the imaginary unit in the complex numbers and $\sigma$ is the sign function. 

The use of these concepts in the analysis of signals in EEG or some other techniques is related to the possibility of the detection of phase differences between signals that are simultaneous and obtained from different regions of the brain. 

\begin{definition}
    Given two signals $f$ and $g$ the absolute instantaneous Phase difference  between $f$ and $g$ is given by
    \begin{equation*}
        \begin{split}
            \Delta_{fg}(t) &= \left| \phi_f(t) - \phi_g(t) \right| \\
            &= \left| \tan^{-1}\frac{Hf(t)}{f(t)} - \tan^{-1}\frac{Hg(t)}{g(t)} \right| \\
            &= \left| \tan^{-1}\frac{(-i\sigma\widehat{f})\,\widecheck{}\,(t)}{f(t)} - \tan^{-1}\frac{(-i\sigma\widehat{g})\,\widecheck{}\,(t)}{g(t)}\right|.
        \end{split}
    \end{equation*}
\end{definition}

Actually, since $\Delta_{fg}$ is itself a function of $t$, we obtain a scalar indicator of phase difference taking mean values of $\Delta_{fg}(t)$. Precisely, for $f$ and $g$ two signals defined in the time interval $I$,
\begin{align*}
    \bar{\Delta}_{fg} &= \frac{1}{|I|}\int_{I} \Delta_{fg}(t)dt
\end{align*}
where $|I|$ is the length of $I$. In fact, when dealing with real signals, as we shall see in the next section, this mean value is computed through the discrete version of the given signal,
\begin{align*}
    \bar{\Delta}_{fg} &= \frac{1}{N}\sum_{i=1}^{N} \Delta_{fg}(t_i) \quad \text{with} \quad N\Delta_{t}=|I|.
\end{align*}

%---------------------------------------------------------------------------------------------
\section{A higher dimensional extension}\label{HigherExtension}
The Hilbert Transform $H$ used in the definition of instantaneous phase of a one dimensional signal in the previous sections is the seed for what, in Harmonic Analysis, is known as the theory of singular integrals. In dimension higher than one, the pioneer works are those of Riesz, Mikhlin and Calder\'{o}n and Zygmund. The most relevant applications of this theory is in the field of regularity of solutions of Partial Differential Equations \cite{Stein1970}. Even when some literature exists, less attention has been paid to the use of these higher dimensional singular integrals to the analysis of multidimensional signals. As in the previous sections, in these sections we shall consider the Riesz singular integrals (Riesz Transforms) in the $n$-dimensional situation with the final purpose of obtaining a definition of phase difference  for images based on two-dimensional Fourier Transforms, or more generally in three-dimensions or even $n$-dimensions. For $f \in L^1(\R^n)$, any integrable function in the space, the Fourier Transform of $f$ is given by 
\begin{equation*}
    \begin{split}
        \widehat{f}(\xi) &= \hat{f}(\xi_1,\xi_2,...,\xi_n)\\
        &= {\int ... \int}_{\mathbb{R}^n} e^{-2\pi i x \cdot \xi}f(x)dx \\
        &= {\int ... \int}_{\mathbb{R}^n} e^{-2\pi i (\xi_1 x_1 +...+\xi_n x_n)}f(x_1,...,x_n)dx_1 ... dx_n.
    \end{split}
\end{equation*}

For $\varphi$ smooth and small ay infinity, the following functions
\begin{equation*}
\psi_j (x)=\left(-i \frac{\xi_j}{|\xi|}\widehat{\varphi}(\xi)\right)^{\smash{\vee}}\quad(x),j=1,...,n
\end{equation*}
with $|\xi|^2=\sum_{j=1}^n \xi_j^2$, $\vee$ the inverse Fourier transform, is well defined, continuous in all of $\R^n$ and tend to zero as $|x|\to\infty.$

 \begin{definition}
     The Riesz Transforms of $\varphi$ are given by
     \begin{equation*}
         R_j\varphi(x)=\left( -i \sigma_j(\xi)\widehat{\varphi}(\xi)\right)\widecheck{}\quad(x)
     \end{equation*}
     where $\sigma_j(\xi)=\dfrac{\xi_j}{|\xi|}$, $j=1,...,n$.
 \end{definition}

The \textit{n}-Riesz transforms of a n-dimensional signal provide a vector of what we could name the pointwise phase of the signal. In fact, we may consider, given $f(x_1,...,x_n)$ the n pointwise phases of $f$ given by
\begin{equation*}
    \Phi^j_f(x_1,...,x_n)=\tan^{-1} \frac{R_jf(x_1,...,x_n)}{f(x_1,...,x_n)}, j=1,...,n.
\end{equation*}
Now we are in position to define and compute a phase difference vector and, if sufficient, a scalar phase difference as the length of this vector. We will refer to this $n$-dimensional phase difference as the \textbf{Divergence Phase Index} (denoted as $DPI$).

\begin{definition}
Given two signals $f$ and $g$ it Divergence Phase Index is given by   
\begin{equation*}
   \overrightarrow{\Delta}_{f,g}(x_1,...,x_n)=(\Phi_f^1(x_1,...,x_n)-\Phi_g^1(x_1,...,x_n),...,\Phi_f^n(x_1,...,x_n)-\Phi_g^n(x_1,...,x_n))
\end{equation*}
and the norm can be considered as absolute phase difference as a nonnegative number associated to the pair $f,g$ of given signals;
\begin{equation*}
    \Delta_{f,g}(x_1,...,x_n)= \| \overrightarrow{\Delta}_{f,g}(x_1,...,x_n) \|.
\end{equation*}
\end{definition}

As in the one-dimensional case, when two \textit{n}-dimensional scalars fields are defined in the same region $\Omega$, we obtain a scalar indicator of the $DPI$ taking the mean values on $\Omega$ of $ \Delta_{f,g}(x_1,...,x_n)$, precisely 
\begin{equation*}
    \bar{\Delta}_{f,g}= \frac{1}{|\Omega|}\ \int_{\Omega} {\Delta}_{f,g}(x_1,...,x_n)dx...dn,
\end{equation*}
where $|\Omega|$ denotes the volume of $\Omega$ in $\R^n$.
Notice finally that the two Riesz transforms are particular instances of the more general singular integrals given for $\theta \in [0,2\pi)$ by $R_{\theta}= \cos \theta R_{1}f + \sin \theta R_{2}f$.
With this family of transforms there is a corresponding family of phases $\Phi_{\theta}f=\tan^{-1}\frac{R_{\theta}f}{f}$ and a corresponding family of phase differences for two given signal $f$ and $g$ $\Delta_{\theta_{f,g}}= ||\Phi_\theta f - \Phi_\theta g ||$.

The following result contains the homogeneity property, of $\bar{\Delta}_{f,g}$ which the one-dimensional DPI shares. Let us point out that this homogeneity property reflects in the fact that $\bar{\Delta}$ provides a measure of the shape differences instead of their intensity. 
\begin{proposition}
    Let \( f \in L^2(\mathbb{R}^n) \) and \(\lambda > 0\) be given. Set \(\widetilde{f}=\lambda f\), then \(\Phi^j_f=\Phi^j_{\widetilde{f}}\). Hence \(\overrightarrow{\Delta}_{f\widetilde{f}}=\overrightarrow{0}\).
\end{proposition}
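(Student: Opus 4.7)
The plan is to reduce the statement to the linearity of the Riesz transform and the cancellation of the common factor $\lambda$ inside the $\arctan$. Since $R_j$ is defined on the Fourier side by multiplication by $-i\sigma_j(\xi)$, it inherits linearity directly from the Fourier transform: for every $\lambda\in\R$ and $f\in L^2(\R^n)$,
\begin{equation*}
R_j(\lambda f)(x)=\bigl(-i\sigma_j(\xi)\widehat{\lambda f}(\xi)\bigr)^{\vee}(x)=\lambda\bigl(-i\sigma_j(\xi)\widehat{f}(\xi)\bigr)^{\vee}(x)=\lambda R_j f(x).
\end{equation*}
This is the only analytical ingredient needed; no principal value computation is required because we are working with the Fourier multiplier definition of $R_j$.

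With $\widetilde{f}=\lambda f$ and $\lambda>0$, I then substitute into the definition of the pointwise phase component:
\begin{equation*}
\Phi^j_{\widetilde{f}}(x)=\tan^{-1}\frac{R_j(\lambda f)(x)}{\lambda f(x)}=\tan^{-1}\frac{\lambda R_j f(x)}{\lambda f(x)}=\tan^{-1}\frac{R_j f(x)}{f(x)}=\Phi^j_f(x),
\end{equation*}
valid at every $x$ where $f(x)\neq 0$. Since this holds componentwise for $j=1,\dots,n$, the vector $\overrightarrow{\Delta}_{f,\widetilde{f}}(x)$ has all components equal to zero, and therefore equals $\vec{0}$.

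There is no real obstacle here; the only subtlety worth flagging is that the $\arctan$ ratio is only defined away from the zero set of $f$, but since the same zero set applies to both $f$ and $\widetilde{f}=\lambda f$ (as $\lambda>0$), the equality $\Phi^j_{\widetilde{f}}=\Phi^j_f$ is meaningful on exactly the same domain. I would also remark, for completeness, that the argument gives the slightly stronger statement that $\Phi^j_{\widetilde{f}}=\Phi^j_f$ for any nonzero real $\lambda$, making clear that the DPI measures structural rather than intensity information, which is precisely the homogeneity property the proposition is meant to record.
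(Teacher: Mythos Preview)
Your proof is correct and follows essentially the same approach as the paper: use the linearity of the Riesz transforms to get $R_j(\lambda f)=\lambda R_j f$, then cancel the common factor $\lambda$ in the ratio $R_j\widetilde f/\widetilde f$. Your additional remarks on the zero set of $f$ and on the extension to arbitrary nonzero real $\lambda$ are valid refinements that the paper does not spell out.
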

\begin{proof}
    Since each Riesz transform is linear, we have $R_j{\widetilde{f}}=R_j(\lambda f)=\lambda(R_jf)$, so that $\frac{\displaystyle R_j\widetilde{f}}{\displaystyle \widetilde{f}}=\frac{\displaystyle \lambda(R_jf)}{\displaystyle \lambda f}=\frac{\displaystyle R_jf}{\displaystyle f}$.
\end{proof}

%%-------------------------------------------------------
\subsection{The Riesz transforms and the rotation of the space}
\label{sec:rotation_theory}

To better understand the multidimensional nature of the method and its sensitivity to spatial structure, it is important to review the mathematical foundations underlying the transformations applied to the signal. Now, we focus on the Riesz transforms and how they capture spatial changes while maintaining consistency under rotations. A formal proof is then presented showing how these transforms behave under spatial rotations, highlighting the key relationship between the Riesz transforms and the preservation or modification of directional information in the resulting tensor representations.

Let $\vec{R}=(R_1,\cdots,R_n)$ be the Riesz transform vector operator in $\mathbb{R}^n$ defined for $f\in L^2(\mathbb{R}^n)$ as
\begin{equation*}
    \vec{R}(f)=(R_1(f),\cdots,R_n(f)),
\end{equation*}
where $R_j(f)$ is the $j^{th}-$Riesz transform in $\mathbb{R}$. Let $\rho:\mathbb{R}^n\to\mathbb{R}^n$ be a rotation in $\mathbb{R}^n$. In other words, $\rho$ is a linear orthogonal transformation in $\mathbb{R}^n$ such that $\det\rho=1$ and $\rho^{-1}=\rho^T$. Then
\begin{equation} \label{eq:rot}
    \vec{R}(f\circ \rho)=\rho^T\left[\left(\vec{R}(f)\right)\circ \rho\right].
\end{equation}

\begin{proof}
    Recall first that the Fourier transform commutes with rotations. In fact, changing variables
    \begin{equation*}
        \begin{split}
         \widehat{(f\circ\rho)(\xi)}=&\int_{R^n}f\left(\rho(x)\right)e^{-2\pi ix\xi }dx\\
        =& \int_{R^n}f(y)e^{-2\pi i \rho^{-1}y\xi}dy\\
        =& \int_{R^n} f(y)e^{-2\pi i \rho^Ty\xi}dy\\
        =& \int_{R^n}f(y)e^{-2\pi i y \rho \xi}dy\\
        =& \widehat{f}\left( \rho(\xi) \right)\\
        =& \left( \widehat{f}\circ \rho\right) (\xi).
    \end{split}
\end{equation*}
On the other hand, since
\begin{equation*}
    \widehat{R_jg}(\xi)=-i\frac{\xi_j}{|\xi|}\widehat{g}(\xi),
\end{equation*}
we have that
    \begin{equation*}
        \begin{split}
         \widehat{R_j(f \circ \rho)}(\xi)=&-i\frac{\xi_j}{|\xi|} \widehat{f \circ \rho}( \xi) \\
        =& -i\frac{\xi_j}{|\xi|}\widehat{f}\left(\rho(\xi)\right).
        \end{split}
    \end{equation*}
Now, set $\eta=\rho(\xi)$ so that $\xi=\rho^{-1}(\eta)=\rho^T(\eta)$. Hence $\displaystyle \xi_j=\left(\rho^T(\eta)\right)_j=\sum_{k=1}^{n}\rho_{kj}\eta_{k}$ and $|\xi|=|\eta|$, so that
    \begin{equation*} 
        \begin{split}
            \widehat{R_j(f\circ\rho)}\left(\rho^{-1}(\eta)\right) =& -i\sum_{k=1}^n\rho_{kj}\frac{\eta_k}{|\eta|}\widehat{f}(\eta)\\
            =& \sum_{k=1}^n \rho_{kj}\widehat{R_k(f)}(\eta).
        \end{split}
    \end{equation*}
Changing back $\xi=\rho^{-1}(\eta)$, we have
    \begin{equation*}
        \begin{split}
            \widehat{R_j(f\circ\rho)}(\xi) =& \sum_{k=1}^n\rho_{kj}\widehat{R_k(f)}\left(\rho(\xi)\right)\\
            =& \sum_{k=1}^n \rho_{kj}\widehat{R_k(f)\circ \rho}(\xi).
        \end{split}
    \end{equation*}
Hence, for $j=1,\ldots,n$
    \begin{equation*}
        R_j(f\circ\rho)=\sum_{k=1}^n\rho_{kj}R_k(f)\circ\rho,
    \end{equation*}
    or in matrix form
    \begin{equation*}
        \vec{R}(f\circ\rho)=\rho^T\left[ \left( \vec{R}(f)\right)\circ \rho\right].
    \end{equation*}
\end{proof}

%%=============================================================================================================================
%%======================================== RESULT =============================================================================
\section{Results}\label{sec:results}
%-----------------------------------------------------
\subsection{One-dimensional examples}

As described in the Introduction, phase difference analysis was originally developed for the study of one-dimensional (1D) signals. These methods have been particularly impactful in neuroscience, where they are widely used to investigate synchronization between brain regions using electroencephalography (EEG), intracranial EEG (iEEG), or magnetoencephalography (MEG) recordings.

To demonstrate how our proposed DPI algorithm can be applied to one-dimensional signal analysis, we investigated changes in functional connectivity using iEEG recordings acquired during an epileptic seizure.

We selected a 20-second iEEG recording during the patient's transition from the baseline (interictal) state (first 10 seconds) to the seizure (ictal) state (10 to 20 second)(See Figure~\ref{fig:PLI_SZ}A). The iEEG was recorded across 9 channels at a sampling rate of 200 Hz. For one-dimensional complex signals, the Hilbert transform requires narrowband filtering to correctly extract phase information \cite{Pikovsky_2001}. Because of  this, we bandpass-filtered the data in the $[1, 3]\,\text{Hz}$ range. We computed the DPI values for all pairwise channel combinations in both the interictal and ictal conditions. The comparison of $DPI$ values between these states is shown in Figure~\ref{fig:PLI_SZ}B.
As clearly shown in Figure~\ref{fig:PLI_SZ}B, DPI values during seizure (SZ) are remarkable higher than those measured during the baseline state. These results are expected, since generalized epilepsy in this case exhibits hypersynchronization across brain regions.

\begin{figure}[H]
    \centering
    \includegraphics[width=0.9\textwidth]{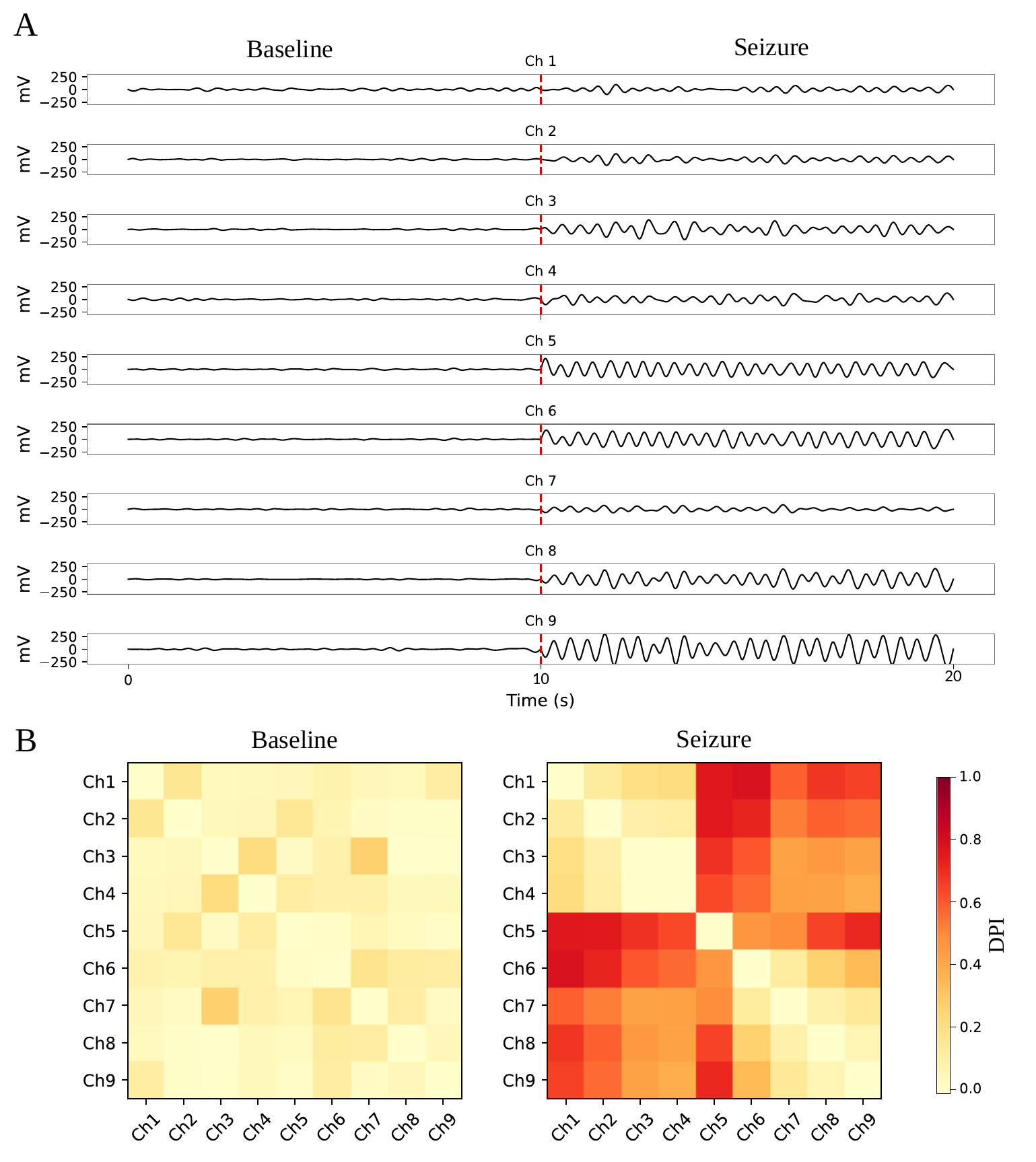}
    \caption{\textbf{(A)} Intracranial electroencephalography (iEEG) signals recorded during baseline (0--10\,s) and seizure (10--20\,s) conditions, filtered in the theta band ($[1, 3]\,\text{Hz}$). \textbf{(B)} Divergence Phase Index ($DPI$) computed across all pairwise channel combinations for baseline (left) and seizure (right) states. }
    \label{fig:PLI_SZ}
\end{figure}
%-----------------------------------------------
\subsection{Two-dimensional examples}

We now extend the application of our proposed $DPI$ method to two-dimensional data, measuring the divergence between pairs of images.
In the first example, we provide an elementary illustration of our method by comparing three simple images: the original image ($O$), a version of the original with reduced intensity (0.5 intensity, denoted $O'$), and a modified image $M$ (with the same intensity as $O$), as shown in Figure~\ref{fig:Fase}A.

We compute the Riesz transform in both the \(x\) and \(y\) directions and obtain the corresponding phase values for all images. Next, we calculate the phase difference vector between image pairs and then compute the norm and average value of this vector. The generalized $DPI$ is calculated for each pair $(O_{ij}, O'_{ij})$, $(O_{ij}, M_{ij})$, and $(O'_{ij}, M_{ij})$, where $O_{ij}$, $O'_{ij}$, and $M_{ij}$ are the restrictions of $O$, $O'$, and $M$, respectively, to square $Q_{ij}$ in a uniform partition of the image into $N_s^2$ subsquares.

To improve visualization of the differences between images, we applied a binarization method to the $DPI$ matrix based on the \textit{Elbow method}~\cite{Thorndike_1953}.

Figure~\ref{fig:Fase}\textit{A second row} shows the resulting two-dimensional $DPI$ matrices.
\[
DPI(O, O') = \left( \bar{\Delta}_{O_{ij}, O'_{ij}} \right)_{i,j=1,\dots,5}, \quad
DPI(O, M) = \left( \bar{\Delta}_{O_{ij}, M_{ij}} \right)_{i,j=1,\dots,5}, \quad
DPI(O', M) = \left( \bar{\Delta}_{O'_{ij}, M_{ij}} \right)_{i,j=1,\dots,5}.
\]
In these visualizations, red indicates no significant difference between corresponding image squares, while green indicates a significant difference. As discussed in Section~\ref{sec:results}, these results confirm that the two-dimensional $DPI$ is insensitive to intensity differences but effectively detects structural or shape differences between images.

Figure~\ref{fig:Fase}B presents $DPI(O, M)$ calculated using different image partition sizes ($N_s^2=4, 5, 6, 8, 9, 12$). This demonstrates that finer partitions enhance the detection of localized modifications in the image, offering improved spatial resolution.

\begin{figure}[H]
    \centering
    \includegraphics[width=0.9\textwidth]{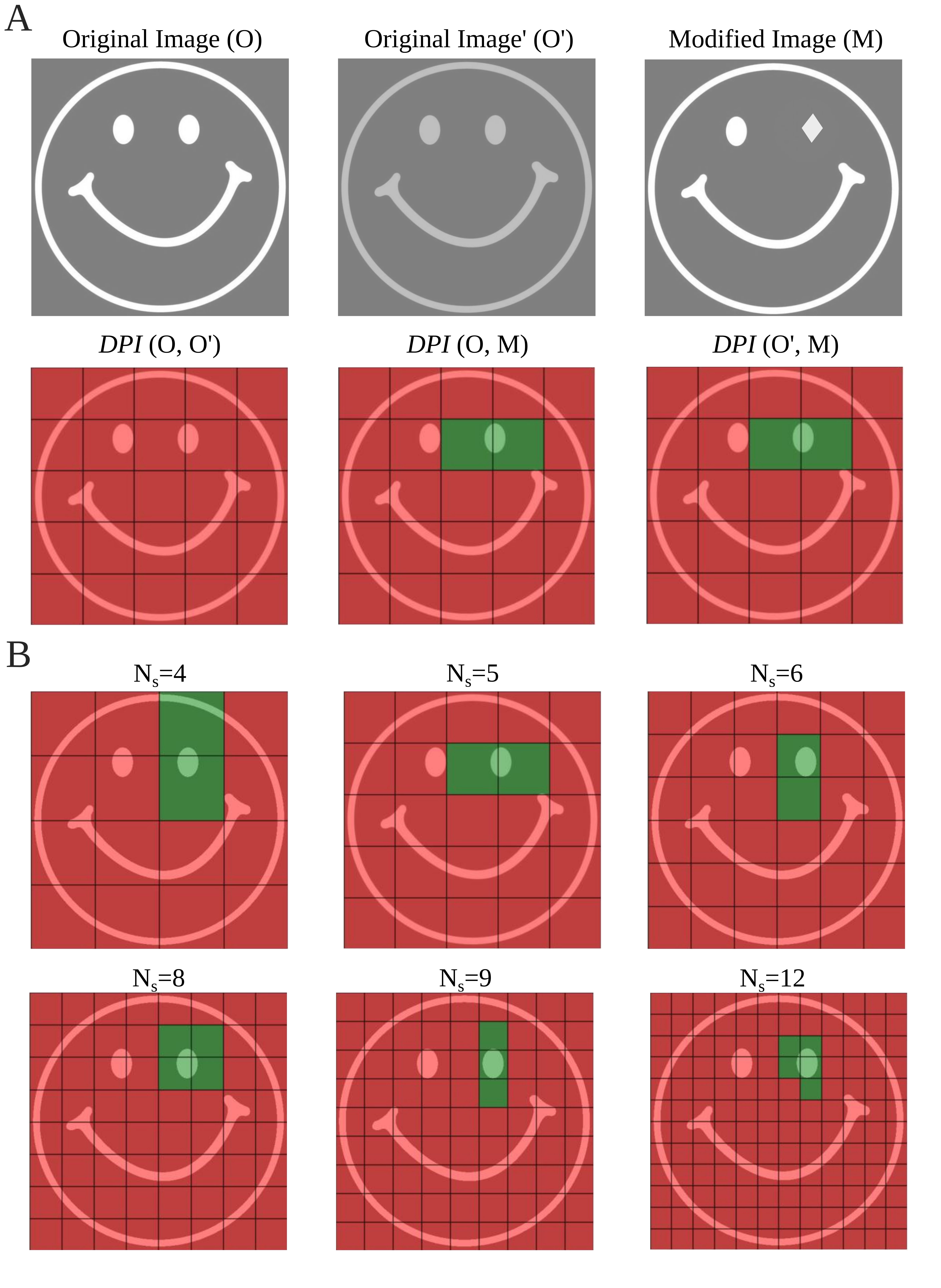}
    \caption{\textbf{(A)} Top row: from left to right, the original image ($O$), the original image with reduced intensity ($O'$), and the modified image ($M$). Bottom row:  Divergence Phase Index ($DPI$) matrices between each pair of images. Red indicates no significant difference between corresponding image squares $(i,j)$, while green indicates a significant difference. \textbf{(B)} $DPI$($O,M$) results for different image partition sizes ($N_s^2 = 4, 5, 6, 8, 9, 12$), showing that finer partitions yield more precise localization of image differences.
}

    \label{fig:Fase}
\end{figure}

To further increase the complexity of the analysis, we examined a real artwork: Vincent van Gogh’s ``Self-Portrait'', painted in July–August 1887 in Paris and currently housed in the van Gogh Museum in Amsterdam\footnote{\url{https://www.wga.hu/html/g/gogh_van/16/selfpo15.html}}. In this case, we intentionally modified one of the eyes in the image, as shown in Figure~\ref{fig:vanGogh}A.

First, we converted the original RGB image to grayscale for both the original version ($O$) and the modified version ($M$). As in the previous example, we also generated a low-intensity version ($O'$) by reducing the intensity of the original to 10\% (Figure ~\ref{fig:vanGogh}B). Figure~\ref{fig:vanGogh}C displays the Divergence Phase Index ($DPI$) computed between all image pairs using a parcellation of $N_s^2 = 17^2=289$ squares, followed by binarization based on the elbow method.

As can be observed, the $DPI $successfully detects the region of the image that was altered, demonstrating robustness to intensity variations and confirming its ability to localize structural changes in realistic, complex images.
 
\begin{figure}[H]
    \centering
    \includegraphics[width=0.9\textwidth]{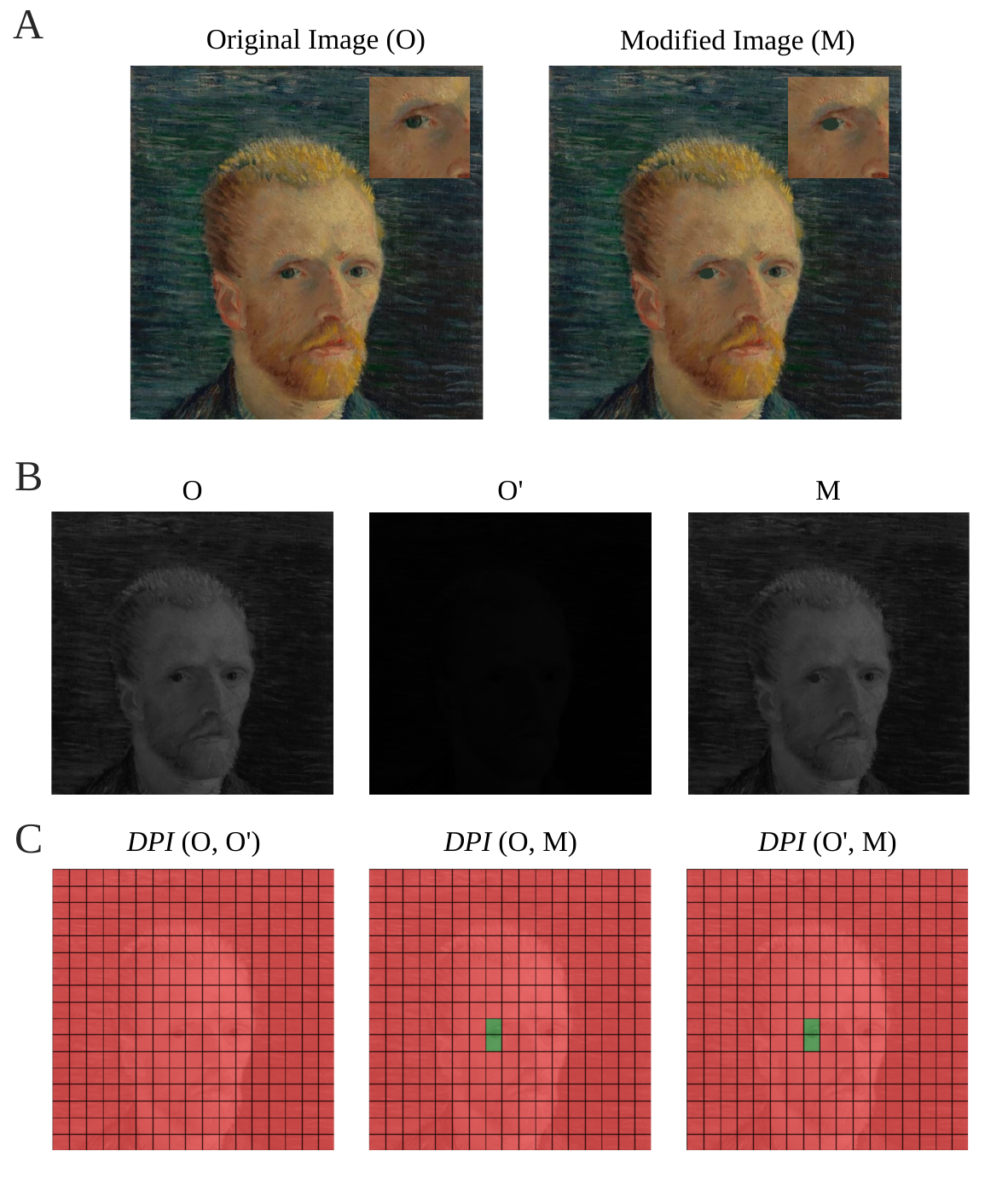}
    \caption{
    \textbf{(A)} Original image (left) and modified image (right) of Vincent van Gogh’s ``Self-Portrait'' with an intentional alteration in the left eye (see zoomed inset). 
    \textbf{(B)} Grayscale conversions of the original image ($O$), the low-intensity version ($O'$; 10\% of original intensity), and the modified image ($M$). 
    \textbf{(C)} Divergence Phase Index ($DPI$) matrices computed for each image pair using a parcellation of $N_s^2 = 17$ squares, followed by binarization based on the elbow method. Red indicates no significant difference between corresponding image squares $(i,j)$, while green indicates a significant difference.
}

    \label{fig:vanGogh}
\end{figure}

%-----------------------------------------------
\subsection{Application of rotation properties in image analysis} \label{sec:app}
Detecting image rotations in real-world contexts—such as photographs or paintings—is relatively easy for both the human eye and many computational algorithms. This is largely due to the anisotropic nature of such images, which often exhibit directional features. For example, van Gogh’s portrait has a predominantly vertical distribution of elements. 

However, when dealing with homogeneous images—such as those acquired through confocal microscopy—rotation detection becomes far less evident. This limitation has even led to the retraction of scientific papers where the same image was used in multiple publications, differing only by a rotation~\cite{retraction2016images}.

In this example, we use a highly isotropic image of the cytoskeleton and membrane systems in neurons from well-preserved biopsy material of patients with Alzheimer’s disease (AD) (Figure~\ref{fig:rot}). The sample was reconstructed from thin sections using conventional electron microscopy and from thick sections using high-voltage electron microscopy. The image was obtained from \texttt{cellimagelibrary.org}.

Based on the property \eqref{eq:rot} explained in Section~\ref{sec:rotation_theory}, to determinate if two matrices/images are the same but rotated we use the following pipeline. To determine whether two images correspond to the same structure under different orientations, the Riesz transform was employed. Prior to comparison, the images were circumscribed within a circle to focus the analysis on the central region and minimize the influence of borders or peripheral content.
In practice, the components $R_1$ and $R_2$ of both the reference image and the target image are first computed. Then, for each possible rotation angle, the components of the reference image are rotated according to the rotation matrix and compared with those of the target image to determine the angle that maximizes the match. In this way, the method does not compare the images directly, but rather evaluates the similarity between their Riesz vector fields, and the angle that produces the best match indicates how much the second image is rotated relative to the first. This approach leverages the sensitivity of the Riesz transform to the orientation of local structures, providing a robust procedure for detecting and quantifying rotations even in images with complex patterns or detailed textures.

The results are presented for five test cases: two nearly imperceptible rotations (1° and 354°), two common rotations (90° and 270°), and two less frequent rotations (137° and 101°).

\begin{figure}[H]
	\centering
	\includegraphics[width=0.7\textwidth]{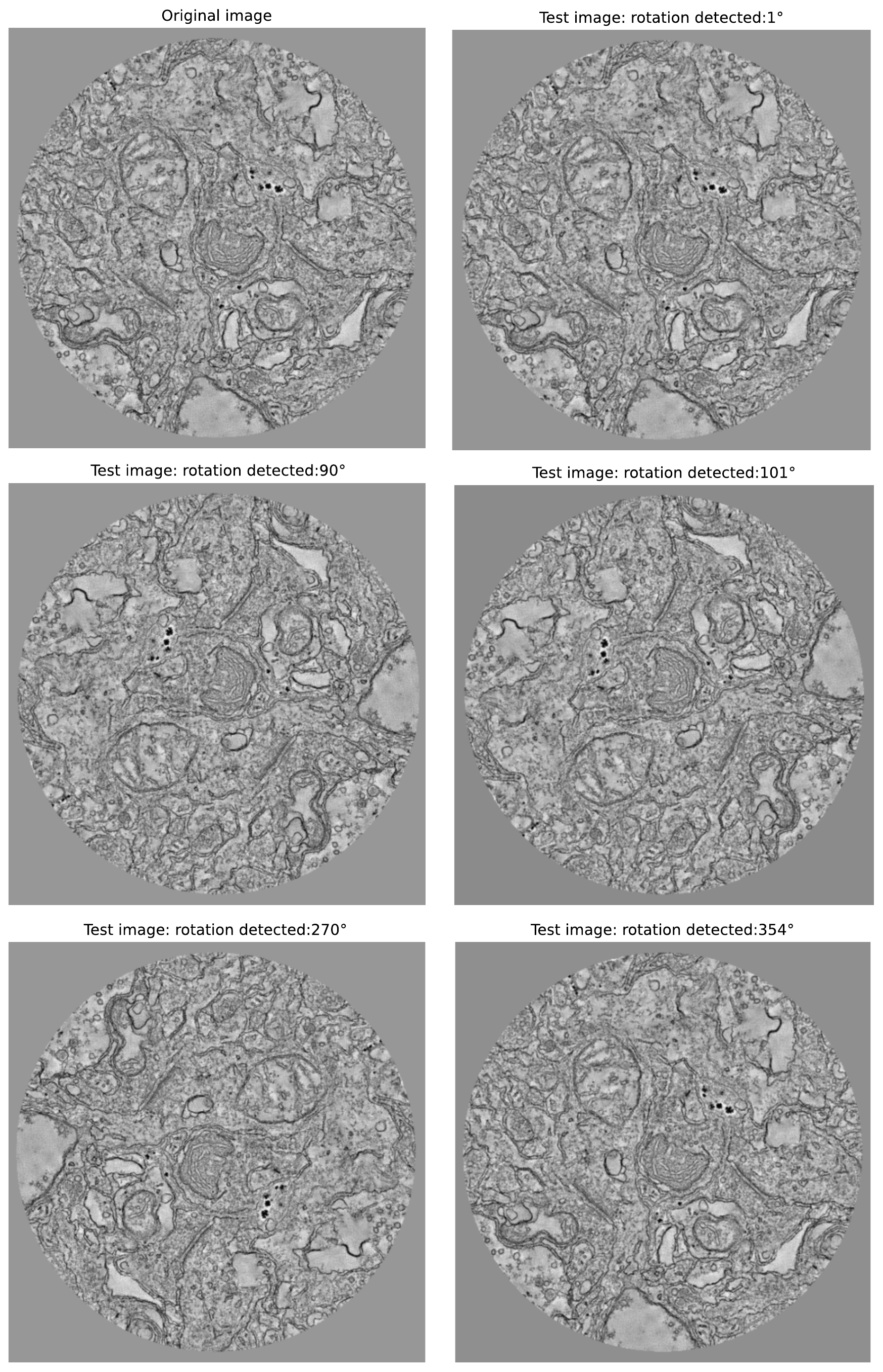}
	 \caption{Visualization of rotation detection using the Riesz transform. The original image (top-left) is compared with several rotated versions (remaining panels) by evaluating the similarity of their $R_1$ and $R_2$ components. The angle that maximizes the match between the vector fields indicates how much each test image is rotated relative to the original. The rotations shown include nearly imperceptible cases, common rotations, and less frequent rotations.}
	\label{fig:rot}
\end{figure}

%==================================================================
%========================= Discussion ==============================

\section{Discussion}
\label{sec:discussion}

In this study, we introduced a novel mathematical framework that generalizes the concept of instantaneous phase difference from one-dimensional signals to higher-dimensional data structures, such as images and spatial maps. We the define a Divergence Phase Index (DPI) as geometry-aware metric grounded in harmonic analysis through the use of Riesz transforms.
\\
Our results shown that DPI can capture structural differences in both synthetic and real-world datasets, including electrophysiological recordings and image-based applications. In the one-dimensional case, DPI robustly detected increased synchronization during epileptic seizures in intracranial EEG (iEEG) recordings. This finding aligns with the idea that generalized epilepsy is characterized by hypersynchronization of the neural networks involved in the pathology~\cite{engel2007epilepsy,Mormann_2005}.
\\
For two-dimensional data, DPI successfully distinguished between images that differ in structure but not necessarily in intensity. This invariance to scalar intensity is particularly important, as many conventional similarity metrics—such as Mean Squared Error, Peak Signal-to-Noise Ratio, Normalized Cross-Correlation, or Histogram Matching~\cite{Wang_2006,gonzalez2009digital}—are highly sensitive to brightness or contrast. By focusing on phase geometry rather than amplitude, DPI is especially well-suited for comparing natural and scientific images, including those from microscopy or digital artwork.
\\
We also evaluated the sensitivity of DPI to geometric transformations, including rotation and reflection. While many natural images exhibit anisotropy that facilitates detection of such transformations, isotropic samples—common in biological microscopy—pose a greater challenge. Our findings show that DPI can detect subtle alterations in spatial structure even in highly isotropic images, offering potential applications in image forensics and data integrity validation. This is particularly relevant given recent cases of scientific retractions due to rotated image reuse.
\\
Theoretical properties such as rotation invariance, homogeneity, and the integration of distributional Fourier transforms further reinforce DPI as a general-purpose analytical tool. 
\\
In summary, the DPI framework provides a versatile, mathematically rigorous, and computationally efficient method for quantifying phase-related differences in both signals and images. Its robustness and flexibility generate new ways for applications in image analysis, biomedical research, and among other fields.

\section{Conclusion}
We presented the Divergence Phase Index (DPI), a geometry-aware metric that extends phase difference analysis from one-dimensional to multidimensional signals and images through the use of Riesz transforms. DPI demonstrated robustness to intensity variations, sensitivity to structural and geometric transformations, and applicability across domains, from detecting hypersynchronization in intracranial EEG to identifying subtle differences in highly isotropic microscopy images. These results highlight DPI’s as potentially  tool  for neuroscience signals analysis, biomedical imaging, and image integrity assessment.

\section*{Funding}
This publication was supported by grand $\#$ MinCyT-FonCyT  PICT-2019 N° 01750 PMO BID; grant CONICET-PUE-IMAL $\#$ 229 201801 00041 CO;  grant CONICET-PIP-2021-2023-GI $\#$11220200101940CO and grant UNL-CAI+D $\#$ 50620190100070LI.

\section*{Author Contributions}

MC: Data curation, investigation, formal analysis, software, visualization, writing - review $\&$ editing. 
HA: Conceptualization, investigation, article supervision, writing: original draft, review $\&$ editing. Funding acquisition.
DM: Article supervision, visualization, writing: original draft, review $\&$ editing.  

\section*{Data Availability Statement}
All data that support the findings of this study are available from the corresponding author upon reasonable request by email.

%============================================================
%	REFERENCE LIST
%-============================================================

%\bibliography{bibliography.bib}
\bibliographystyle{unsrt}

\end{document}